\documentclass[11pt]{article}
\usepackage{xcolor}
\usepackage[colorlinks]{hyperref}
\hypersetup{
	linkcolor= blue
	,citecolor= green
	,filecolor= cyan
	,urlcolor= magenta
	,menucolor= red
	,runcolor= cyan
}
\usepackage{float}
\usepackage{authblk}
\usepackage[margin=1in]{geometry}
\usepackage{graphicx} 
\usepackage{amsmath}
\usepackage{bm}
\usepackage{tikz, pgfplots}
\usetikzlibrary{arrows.meta, positioning, shapes.geometric, calc}
\usepackage{accents}
\usepackage{amssymb}
\usepackage{amsthm}
\usepackage{booktabs}
\usepackage{cleveref}
\usepackage{}
\theoremstyle{plain}
\newtheorem{theorem}{Theorem}[section] 

\theoremstyle{definition}
\newtheorem{definition}{Definition}[section]

\newcommand{\eqautoref}[1]{\hyperref[#1]{Eq.~(\ref*{#1})}}
\definecolor{inputcolor}{RGB}{102,178,255}      
\definecolor{embcolor}{RGB}{255,204,102}        
\definecolor{enccolor}{RGB}{153,204,153}        
\definecolor{sdecolor}{RGB}{255,153,153}        
\definecolor{latcolor}{RGB}{204,153,255}        
\definecolor{deccolor}{RGB}{255,204,229}        
\definecolor{outcolor}{RGB}{192,192,192}        

\title{\bf{Modeling Social Indicator Dynamics with Variational Neural Stochastic Differential Equations}}

\author[1,2]{Sandeep Kumar Samota\thanks{\textcolor{magenta}{\texttt{sandeepksamota@gmail.com}}}}
\author[1]{Reema Gupta\thanks{\textcolor{magenta}{\texttt{reema270898@gmail.com}}}}
\author[1,2]{S. Chakraverty\thanks{\textcolor{magenta}{\texttt{Corresponding author:sne\_chak@yahoo.com}}}}
\affil[1]{Department of Mathematics, National Institute of Technology, Rourkela, India}
\affil[2]{Poverty Alleviation Research Centre, National Institute of Technology, Rourkela, India}
\date{}
\begin{document}
	\maketitle

	\begin{abstract}
		This study examines the challenges of modeling complex and noisy data related to socioeconomic factors over time, with a focus on data from various districts in Odisha, India. Traditional time-series models struggle to capture both trends and variations together in this type of data. To tackle this, a Variational Neural Stochastic Differential Equation (V-NSDE) model is designed that combines the expressive dynamics of Neural SDEs with the generative capabilities of Variational Autoencoders (VAEs). This model uses an encoder and a decoder. The encoder takes the initial observations and district embeddings and translates them into a Gaussian distribution, which determines the mean and log-variance of the first latent state. Then the obtained latent state initiates the Neural SDE, which utilize neural networks to determine the drift and diffusion functions that govern continuous-time latent dynamics. These governing functions depend on the time index, latent state, and district embedding, which help the model learn the unique characteristics specific to each district. After that, using a probabilistic decoder, the observations are reconstructed from the latent trajectory. The decoder outputs a mean and log-variance for each time step, which follows the Gaussian likelihood. The Evidence Lower Bound (ELBO) training loss improves by adding a KL-divergence regularization term to the negative log-likelihood (nll). The obtained results demonstrate the effective learning of V-NSDE in recognizing complex patterns over time, yielding realistic outcomes that include clear trends and random fluctuations across different areas. \\ 
		\textbf{Keywords}: Latent Neural SDE, Variational , Socioeconomic, Dynamical system, KL divergence.
	\end{abstract}
	\section{Introduction}
Socioeconomic development over time is a complex, dynamic process. The structural trends, short-term policies, and uncertain shocks can often impact this process. In this context, socioeconomic indicators that affect household poverty are continually evolving and are becoming increasingly diverse across regions. Due to some geographical, administrative, infrastructural, and historical factors, these indicators pose nonlinear trajectories, fluctuations, and disparities. In socioeconomically diverse states like Odisha, India, capturing such dynamics in a principled and uncertainty-aware manner remains a central challenge \cite{panda2021poverty}.\\

Traditional approaches to model such indicators mostly rely on discrete-time econometric and statistical methods such as regression, panel data, and autoregressive techniques \cite{kessels2016structural,odjesa2021socio, kumar2023multidimensional}. For example, Dang and Lanjouw \cite{dang2018poverty} analyzed poverty dynamics in India by constructing the panel dataset from various surveys. Cheng et al. \cite{cheng2018building} employed a system dynamics model, a simulation model used to understand complex phenomena. They used it to explore the relationship between environment, disasters and poverty. Łatuszyńska and Fate \cite{latuszynska2022combining} presented a hybrid model combining system dynamics and agent-based simulation to study the impact of public intervention on poverty. More recently, Kumar et al. \cite{kumar2025assessing} constructed a wealth index using principal component analysis for a district in Odisha, utilizing primary data collected, and compared the outcomes with a similar index from the national family health survey (NFHS) surveys.\\

These methods impose strong parametric assumptions, depend on linear or weak nonlinear relationships, and operate at fixed observation intervals. Moreover, Indian household data obtained from surveys like the district-level household and facility survey (DLHS) and NFHS are often found to be sparse, irregularly sampled, and noisy. It limits the effectiveness of traditional time-series models. Furthermore, these limitations become more evident when modeling multiple indicators with an interacting nature over a long time, where uncertainty and latent influences are crucial.\\

In this regard, stochastic differential equations (SDEs) can provide a mathematical framework for modeling such systems \cite{ross1995stochastic}. They can decompose the problem into two parts: deterministic and stochastic. By doing this, SDEs can model both drift trends and the stochastic nature of the socioeconomic problem. Therefore, it can be said that SDEs allow continuous-time modeling, uncertainty quantification, and theoretical interpretability. However, classical SDE models require the drift and diffusion functions to be specified in advance using some simplistic parametric forms that may be inadequate for capturing the high-dimensional and nonlinear interaction characteristics of real-world socioeconomic systems \cite{gupta2025spectral}. Furthermore, parameter estimation in multivariate SDEs can quickly become intractable as model complexity increases.\\

The developments in deep learning (DL) techniques have led to the advancement of continuous-time dynamical systems governed by neural networks \cite{sahoo2024unsupervised}. Beginning with neural ordinary differential equations (Neural ODEs), which parameterize deterministic continuous-time dynamics using neural networks and enable end-to-end training through differentiable ODE solvers \cite{chen2018neural}. This model establishes a conceptual bridge between DL and dynamical systems theory. It provides a principled approach to modeling nonlinear temporal dynamics without introducing any discretization. Also, parallel developments such as neural controlled differential equations (Neural CDEs), which basically treat observed time series as control signals \cite{kidger2020neural}. These drive latent continuous-time dynamics and are particularly effective for irregularly sampled data.\\

Building upon this foundation, neural stochastic differential equations (Neural SDEs) extend Neural DEs-based methodologies by incorporating stochasticity into the dynamics through learned diffusion terms \cite{liu2019neural}. Neural SDEs can effectively model the randomness, shocks, and variability present in socioeconomic dynamics \cite{jia2019neural}. They utilize the expressive power of NNs with the mathematical rigor of stochastic calculus, enabling data-driven discovery of complex continuous-time dynamics. More recent work focuses on scalable and amortized variational inference for latent SDEs, addressing computational and statistical challenges associated with learning continuous-time stochastic models from sparse and noisy observations \cite{li2019scalable,li2020scalable}. Neural SDE models have demonstrated remarkable success in modelling applications related to finance, biology, and physics, where systems are governed by stochastic processes \cite{oh2025comprehensive}. Despite all, the application of neural SDEs in modeling socioeconomic dynamics remains limited and underexplored.\\

A major challenge in learning neural SDEs from real data is the intractability of the likelihood function. Variational inference offers a solution to this problem by approximating the intractable posterior distribution over latent trajectories with a tractable variational family. Variational neural stochastic differential equations (V-NSDEs) enable efficient learning of latent continuous-time processes and provide uncertainty-aware predictions \cite{tzen2019neural}. When combined with latent variable formulations, variational inference models allow high-dimensional observed indicators to be governed by a lower-dimensional latent stochastic process.\\
	
Here, we will utilize district-level socioeconomic data related to poverty in the state of Odisha, India. Odisha exhibits substantial inter-district heterogeneity across all socioeconomic indicators, such as education, housing quality, sanitation access, etc. Historically marginalized regions like the KBK (Kalahandi–Balangir–Koraput) have different developmental trajectories compared to coastal and other urbanized districts. Available survey measurements of these indicators are available at multi-year intervals and are subject to compounding uncertainty. These studies motivate the need to develop a modeling framework that can (i) explicitly handle irregular temporal sampling, (ii) quantify predictive uncertainty, and (iii) provide interpretable latent structure properties that variational latent SDEs naturally offer. A V-NSDE framework can allow such observations to be embedded within a continuous-time stochastic model. It can also reconstruct the latent trajectory and principal uncertainty quantification at the district level.\\
	
In addition to temporal uncertainty and nonlinear dynamics, district-level socioeconomic modeling should understand and incorporate spatial heterogeneity, institutional capacity, historical deprivation, and infrastructural endowments. To address this, district embedding vectors can be incorporated into the V-NSDE framework. These vectors can be learned during training the model and can act as a condition on the drift and diffusion functions. These embeddings act as continuous representations of district-specific characteristics, which enables the model to share statistical strength across districts while simultaneously capturing systematic inter-district differences.\\

In this work, we advance the V-NSDE framework for modeling socioeconomic dynamics by developing a latent continuous-time formulation. This method incorporates district-specific embeddings for multivariate district-level data. Our approach learns nonlinear drift and diffusion functions that govern the evolution of latent socioeconomic states, while explicitly accounting for observational noise and irregular sampling. The proposed model can provide a reliable tool for analyzing long-term socioeconomic processes under uncertainty.\\
	
We will validate the proposed method using district-level socioeconomic indicators from Odisha. Further, we will demonstrate that V-NSDE can effectively capture heterogeneous developmental patterns across districts. Beyond the specific case of Odisha, the proposed framework offers a general methodological foundation for continuous-time modeling of socioeconomic systems in data-constrained and uncertain environments.\\
	
The remainder of the paper is organized as follows. Section 2 presents the mathematical formulation of neural stochastic differential equations and the associated variational inference framework. Section 3 describes the proposed latent model structure and learning methodology. Section 4 details the district-level Odisha dataset and experimental setup. Section 5 discusses the empirical results and their socioeconomic implications. Finally, Section 6 concludes the paper and outlines directions for future research.\\
	
%
%

\section{Mathematical Preliminaries}
This section provides the fundamental concepts required for the formulation of V-NSDE.

\subsection{Probability Space and Random Variables}
\begin{definition}[\cite{rohatgi2015introduction}]
	If $\Omega$ denotes the sample space, $\mathcal{F}$ is a $\sigma$-algebra of measurable events, and $\mathbb{P}$ is a probability measure satisfying $\mathbb{P}(\Omega)=1$, then triplet $(\Omega, \mathcal{F}, \mathbb{P})$ is called \textbf{probability space}.
\end{definition}

\begin{definition}[\cite{rohatgi2015introduction}]
	If $(\Omega, \mathcal{F}, \mathbb{P})$ is probability space, then the map \[
	X : \Omega \rightarrow \mathbb{R}^m.
	\] is called \textbf{random variable}.
\end{definition}

The probability distribution of $X$ can be characterized by probability mass function (pmf) in the discrete case and probability density function (pdf) in the continuous case. Now, we will define expectation of random variable $X$.
\begin{definition}[\cite{rohatgi2015introduction}]
	The \textbf{expectation} of a random variable $X$ is defined as
	\[
	\mathbb{E}[X] =
	\begin{cases}
		\sum_{x} x \, \mathbb{P}(X=x), & \text{discrete case}, \\[6pt]
		\int_{\mathbb{R}} x \, p(x)\, dx, & \text{continuous case},
	\end{cases}
	\] where $p(x)$ is pdf of $X$.
\end{definition}
More generally, the expectation of a function $g(X)$ is defined as
\[
\mathbb{E}_{p(x)}[g(X)] = \int g(x)\, p(x)\, dx.
\]

It follows linearity condition, 
\[
\mathbb{E}[aX + bY] = a\,\mathbb{E}[X] + b\,\mathbb{E}[Y],
\]
for any random variables $X, Y$ and constants $a,b \in \mathbb{R}$.

\begin{definition}[\cite{rohatgi2015introduction}]
	The \textbf{variance} of a random variable $X$ is defined as
	\[
	\mathrm{Var}(X) = \mathbb{E}\left[(X - \mathbb{E}[X])^2\right].
	\]
\end{definition}
The variance measures the dispersion around the mean.
\begin{definition}[\cite{rohatgi2015introduction}]
	If $X$ and $Y$ are random variables, then \textbf{conditional probability} of $X$ given $Y=y$ is defined as
	\[
	p(x \mid y) = \frac{p(x,y)}{p(y)}, \quad p(y)>0. 
	\]
\end{definition}

\subsection{Bayes' Rule}
Bayes' rule provides a fundamental relationship between prior knowledge, observed data, and posterior inference. It forms the mathematical backbone of Bayesian statistics and variational inference.\\

Let $x$ be a observed data and $z$ an unobserved (latent) random variable. Then, the joint distribution of $(x,z)$ is factorized as
\[
p(x,z) = p(x \mid z)\, p(z),
\]
here $p(z)$ is the prior distribution over the latent variable and $p(x \mid z)$ is the likelihood function.\\

Bayes' rule expresses the posterior distribution $p(z \mid x)$ as \cite{rohatgi2015introduction}
\[
p(z \mid x)=\frac{p(x \mid z)\, p(z)}{p(x)}
\]
where the denominator
\[
p(x) = \int p(x \mid z)\, p(z)\, dz
\]
is known as the marginal likelihood or evidence, which is often intractable for complex models.

\begin{definition}[\cite{kullback1951information}]
	The \textbf{KL divergence} between two probability distributions $p(x) \sim P$ and $q(x) \sim Q$ is defined as
	\[
	D_{KL}(P \| Q)
	=
	\int p(x)\, \log \frac{p(x)}{q(x)}\, dx.
	\]
\end{definition}
It satisfies $\mathrm{KL}(P\|Q)\ge 0$ and is zero if and only if $P=Q$ almost everywhere. It measures similarity between the probability distributions.

\subsection{Monte Carlo Approximation}
Some expectations are analytically intractable. For those Monte Carlo sampling approximation are done as
\[
\mathbb{E}_{p(X)}[g(X)]
\approx
\frac{1}{N}\sum_{i=1}^N g(x_i),
\quad x_i \sim p(X).
\]
Monte Carlo estimation plays a central role in training probabilistic deep learning models like our which includes VAEs.

\subsection{Brownian Motion}
\begin{definition}[\cite{oksendal2013stochastic}]
	A \textbf{stochastic process} is a time parameterized collection of random variables
	\[\{X_t\}_{t \ge 0},\]
	defined on probability space $(\Omega, \mathcal{F}, \mathbb{P})$.
\end{definition}
\begin{definition}[\cite{oksendal2013stochastic}]
	A $d$-dimensional stochastic process $\{W_t\}_{t \ge 0}$ is called standard \textbf{Brownian motion} (or Wiener process), if it satisfies
	\begin{enumerate}
		\item $W_0 = 0$ almost surely,
		\item $W_t$ has independent increments,
		\item $W_t - W_s \sim \mathcal{N}(0, (t-s)I_d)$ for $0 \le s < t$,
		\item $W_t$ has almost surely continuous paths.
	\end{enumerate}
\end{definition}

\subsection{It\^{o} SDEs}

For $X_t \in \mathbb{R}^m$, It\^{o} SDE is defined as \cite{oksendal2013stochastic}
\[
dX_t = f(X_t,t)\,dt + g(X_t,t)\,dW_t,
\quad X_0 = x_0,
\]
where $f$ is the drift function, $g$ is the diffusion function, and $W_t$ is an $m$-dimensional Brownian motion. In the SDE, the drift term models deterministic dynamics and the diffusion term captures stochastic fluctuations.\\

The SDE can be equivalently written in integral form as
\[
X_t = X_0 + \int_0^t f(X_s,s)\,ds + \int_0^t g(X_s,s)\,dW_s,
\]
where the stochastic integral is interpreted in the It\^{o} sense.

\section{V-NSDE Methodology}
This section presents the essential methodological components of the V-NSDE used in the current study. Firstly, the detailed forward architecture is explained whihc emphasizes the structure and functionality of the model. Subsequently, the loss formulation is discussed in terms of its importance and influence on model performance. Finally, the process of backward propagation is discussed, illustrating how it optimizes the learning mechanism and enhances the overall effectiveness of the model.
\subsection{Forward Architecture Components}
The forward part of the model is consists of four components: Encoder, Latent NSDE, and Decoder. In the subsequent subsections each one is systematically described. The schematic diagram of the V-NSDE method is presented in Fig. (\ref{fig:latent-sde-architecture-colored}).
\begin{figure}[ht]
	\centering
	\begin{tikzpicture}[
		>=latex,
		node distance=1.8cm,
		block/.style={draw, thick, rounded corners, align=center,
			minimum height=1.2cm, minimum width=2.8cm},
		smallblock/.style={draw, thick, rounded corners, align=center,
			minimum height=1cm, minimum width=2.0cm},
		every node/.style={font=\small}
		]
		
		
		\node[block, fill=inputcolor!60] (yseq) {Input\\ $\mathbf{y}_0\in \mathbb{R}^N$};
		\node[smallblock, below=1.4cm of yseq, fill=inputcolor!40] (did) {District ID\\ $d$};
		
		\node[block, right=1.4cm of did, fill=embcolor!70] (emb) {Embedding\\ $\mathbf{e}_d \in \mathbb{R}^m$};
		
		\node[block, right=1.0cm of yseq, fill=enccolor!70] (enc) {Encoder \\ $q_{\phi}(\mathbf{z}_0 | \mathbf{y}_0, \mathbf{e}_{d})$};
		\node[block, above =1.5cm of enc, fill=latcolor!70] (z0) {Initial latent state\\ $\mathbf{z}_0 \in \mathbb{R}^n$};
		
		\node[block, right=1.0cm of z0, minimum width=3.5cm, fill=sdecolor!70] (sde) {Latent SDE\\[2pt]
		$d\mathbf{z}_t = f_\theta(\mathbf{z}_t, t, \mathbf{e}_{d})dt + g_\theta(\mathbf{z}_t, t, \mathbf{e}_{d})d\mathbf{W}_t$};
		\node[block, below = 1.0cm of sde, fill=latcolor!70] (sdesolver) {Numerical\\
		SDE solver};
		\node[block, below = 1.0cm of sdesolver, fill=latcolor!70] (ztraj) {Latent trajectory\\
		$\mathbf{z}_0,\dots,\mathbf{z}_{T-1}$};
		
		\node[block, below = 1.0cm of ztraj, fill=deccolor!80] (dec) {Decoder\\ $p_{\theta}(\mathbf{y}_t | \mathbf{z}_t, \mathbf{e}_{d})$};
		
		\node[block, below = 1.0cm of dec, fill=outcolor!60] (yhat) {Predicted outputs\\
		$\boldsymbol{\mu}_{y_t}, \boldsymbol{\sigma}^2_{y_t}$};
		
		\draw[->, thick] (yseq) -- (enc);
		\draw[->, thick] (enc) -- (z0);
		\draw[->, thick] (z0) -- (sde);
		\draw[->, thick] (sde) -- (sdesolver);
		\draw[->, thick] (sdesolver)--(ztraj);
		\draw[->, thick] (ztraj) -- (dec);
		\draw[->, thick] (dec) -- (yhat);
		
		\draw[->, thick] (did) -- (emb);
		
		\draw[->, thick] (emb) --  (enc);
		
		\draw[->, thick] (emb) -- (sde);
		
		\draw[->, thick] (emb) -- (dec);
		
		\node[above = 0.1cm of sde] (time) {$t = 0,1, \dots,T-1$};
		
	\end{tikzpicture}
	\caption{Block diagram of the proposed V-NSDE architecture for district $d$.}
	\label{fig:latent-sde-architecture-colored}
\end{figure}
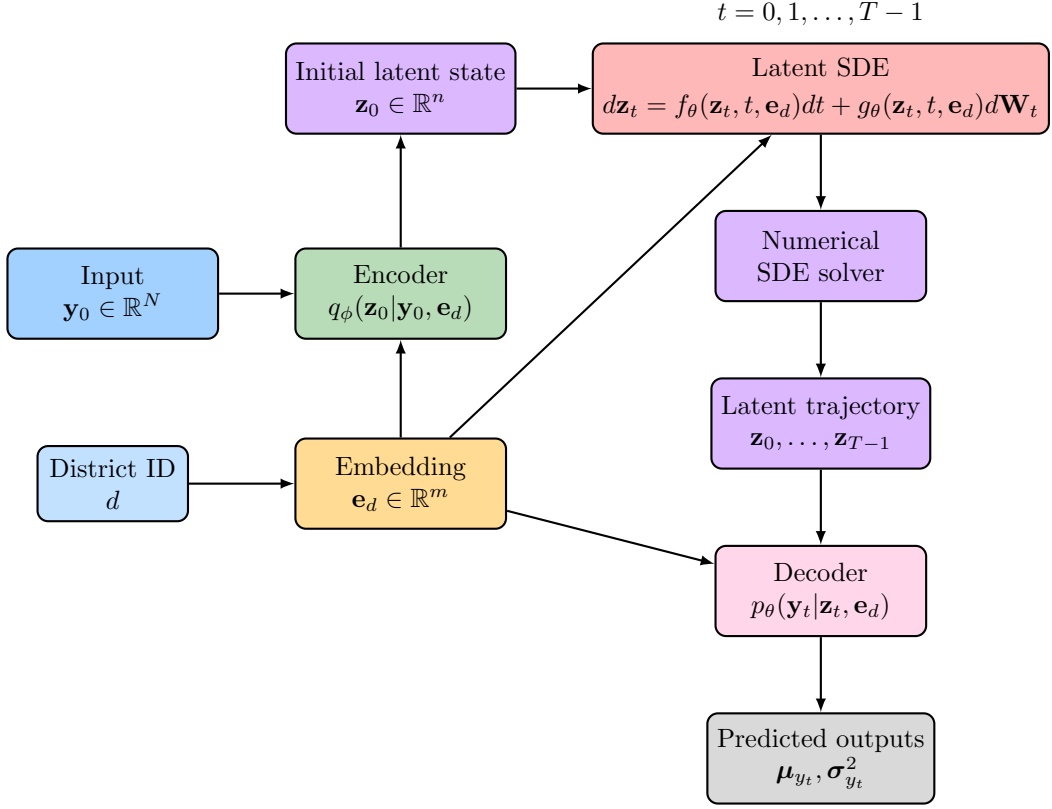
\subsubsection{District Embedding}
To capture the district specific characteristics, embedding technique is utilized. It converts discrete inputs into low-dimensional numerical vectors. In this regard, each district is associated with a learnable, fixed-size embedding vector $\mathbf{e}_{d} \in \mathbb{R}^{m}$. These embeddings serve to personalize the VAE components for each district, allowing the model to capture district-specific characteristics that influence the observed socio-economic indicators and their temporal dynamics. 
\subsubsection{Encoder: $q_{\phi}(\mathbf{z}_0 | \mathbf{y}_0, \mathbf{e}_{d})$}
The encoder is a deep neural network (DNN) whose primary role is to establish the initial probabilistic latent state $\mathbf{z}_0 \in \mathbb{R}^n$ from the very first observation of a time series $\mathbf{y}_0 \in \mathbb{R}^N$ and its corresponding static contextual information, the district embedding $\mathbf{e}_{d} \in \mathbb{R}^m$. As given in Eq. (\ref{eq:posterior distribution phi}), it models the approximate posterior distribution $q_\phi(\mathbf{z}_0 | \mathbf{y}_0, \mathbf{e}_{d})$ as a diagonal Gaussian distribution over $\mathbf{z}_0$.
\begin{equation}\label{eq:posterior distribution phi}
	q_{\phi}(\mathbf{z}_0 | \mathbf{y}_0, \mathbf{e}_{d}) = \mathcal{N}(\boldsymbol{\mu}_{z_0}, \text{diag}(\exp(\log \boldsymbol{\sigma}^2_{z_0})))
\end{equation}
	
The Encoder takes the concatenated initial observation and district embedding as input and output the mean $\boldsymbol{\mu}_{z_0}$ and log-variance $\log \boldsymbol{\sigma}^2_{z_0}$ as described in Eq. (\ref{eq:encoder}).
\begin{equation}\label{eq:encoder}
	\begin{aligned}
		[\boldsymbol{\mu}_{z_0}, \log \boldsymbol{\sigma}^2_{z_0}] &= \text{Encoder}([\mathbf{y}_0; \mathbf{e}_{d}]), \\ 
		\mathbf{z}_0 &= \boldsymbol{\mu}_{z_0} + \boldsymbol{\sigma}_{z_0} \odot \boldsymbol{\epsilon}, \quad \boldsymbol{\epsilon} \sim \mathcal{N}(\mathbf{0}, \mathbf{I}),
	\end{aligned}
\end{equation}
where $\odot$ denotes element-wise multiplication and $\boldsymbol{\epsilon}$ is a sample from a standard normal distribution for reparameterization trick.\\
	
The Encoder compresses the initial high-dimensional observation into a lower-dimensional, probabilistic latent representation that captures its essential features. The sampled $\mathbf{z}_0$ then serves as the starting point for the subsequent SDE trajectory. The probabilistic nature allows the model to capture uncertainty in the initial state.

\subsubsection{Latent NSDE}
The latent NSDE models the continuous-time evolution of the latent state $\mathbf{z}_t$ given its initial value $\mathbf{z}_0$ (sampled from the encoder) and the static district embedding $\mathbf{e}_{d}$. It captures both deterministic dynamics and stochastic fluctuations within the latent space. The temporal dynamics of the latent state $\mathbf{z}_t \in \mathbb{R}^{n}$ are governed by an Ito SDE defined in Eq. (\ref{eq:VNSDE}), conditioned on time $t$ and the district embedding $\mathbf{e}_{d}$.
\begin{equation}\label{eq:VNSDE}
	d\mathbf{z}_t = f_\theta(\mathbf{z}_t, t, \mathbf{e}_{d})dt + g_\theta(\mathbf{z}_t, t, \mathbf{e}_{d})d\mathbf{W}_t.
\end{equation}
Here, $f_\theta: \mathbb{R}^{n} \times \mathbb{R} \times \mathbb{R}^{m} \to \mathbb{R}^{n}$ is the drift function and $g_\theta: \mathbb{R}^{n} \times \mathbb{R} \times \mathbb{R}^{m} \to \mathbb{R}^{n}$ is the diagonal diffusion function. Both $f_\theta$ and $g_\theta$ are parameterized by DNNs. $\mathbf{W}_t$ represents a standard Brownian motion. In a diagonal noise SDE, each dimension of the latent space has its own independent Wiener process, meaning $\mathbf{W}_t$ is a vector of independent random increments.\\
	
We assume that the neural drift and diffusion functions, augmented with district embeddings, satisfy global Lipschitz and linear growth conditions. Under these assumptions, the latent NSDE admits a unique strong solution for each district.\\
	
Given the initial latent state $\mathbf{z}_0$ and the parameterized drift and diffusion functions, the latent SDE is numerically integrated over the time interval $[0, T]$ using an SDE solver to get the solution given as in Eq. \eqref{eq:sde solver}. This process yields the full latent trajectory $\{\mathbf{z}_t\}_{t=0}^T$.
	
\begin{equation}\label{eq:sde solver}
	\mathbf{z}_t = \mathbf{z}_0 + \int_{0}^{t} f_\theta(\mathbf{z}_s, s, \mathbf{e}_{d})ds + \int_{0}^{t} g_\theta(\mathbf{z}_s, s, \mathbf{e}_{d})d\mathbf{W}_s
\end{equation}
	
The Latent SDE is the heart of the temporal modeling. It allows the model to learn complex, non-linear, and stochastic dependencies between successive latent states, ensuring that the generated latent trajectories are plausible and reflect the underlying dynamics of the real-world time series. The inclusion of $\mathbf{e}_{d}$ allows the dynamics to be specific to each district.
\subsubsection{Decoder: $p_{\theta}(\mathbf{y}_t | \mathbf{z}_t, \mathbf{e}_{d})$}
The Decoder is also a DNN model that acts as the generative network, taking the evolved latent state $\mathbf{z}_t$ from the NSDE and the district embedding $\mathbf{e}_{d}$ to reconstruct the observed data $\mathbf{y}_t$. As described in Eq. (\ref{eq:likelihood distribution theta}), it models the reconstruction output by the likelihood $p_{\theta}(\mathbf{y}_t | \mathbf{z}_t, \mathbf{e}_{d})$ as a diagonal Gaussian distribution.
	\begin{equation}\label{eq:likelihood distribution theta}
		p_{\theta}(\mathbf{y}_t | \mathbf{z}_t, \mathbf{e}_{d}) = \mathcal{N}(\mathbf{y}_t; \boldsymbol{\mu}_{y_t}, \text{diag}(\exp(\log \boldsymbol{\sigma}^2_{y_t})))
	\end{equation}
	The Decoder outputs the mean $\mu_{y_t}$ and log-variance $\log \sigma^2_{y_t}$ that takes the latent state and district embedding as input as given in Eq. (\ref{eq:decoder}).
	\begin{equation}\label{eq:decoder}
		[\boldsymbol{\mu}_{y_t}, \log \boldsymbol{\sigma}^2_{y_t}] = \text{Decoder}([\mathbf{z}_t; \mathbf{e}_{d}])
	\end{equation}
	 For each latent state $\mathbf{z}_t$ along the trajectory, the decoder computes the parameters of a Gaussian distribution over the observed features $\mathbf{y}_t$. The inclusion of $\mathbf{e}_{d}$ allows the reconstruction to be contextualized by the specific district characteristics.
	\subsection{Backward Model Architecture Components}\label{sec. bacward model}
	\subsubsection{Evidence Lower Bound (ELBO) Loss Function Derivation}
	The primary objective when training the V-NSDE model is to maximize the marginal likelihood of the observed data, $p_\theta(\mathbf{Y})$, where $\mathbf{Y} = \{\mathbf{y}_t\}_{t=0}^T$ is the entire observed time series for a given district. The marginal likelihood $p_\theta(\mathbf{Y})$ can be written by integrating over the latent trajectory $\mathbf{Z} = \{\mathbf{z}_t\}_{t=0}^T$:
	\begin{equation}\label{eq:marginal likelihood}
		p_\theta(\mathbf{Y}) = \int p_\theta(\mathbf{Y}|\mathbf{Z}) p_\theta(\mathbf{Z}) d\mathbf{Z}.
	\end{equation}
	
	However, directly maximizing $p_\theta(\mathbf{Y})$ is intractable due to the complex NN model. Instead, we optimize a lower bound on the marginal likelihood, known as the Evidence Lower Bound (ELBO). The ELBO can be derived using Jensen's inequality and properties of logarithms.\\
	
	We introduce the approximate posterior $q_{\phi}(\mathbf{Z}|\mathbf{Y})$, which is approximated by $q_{\phi}(\mathbf{z}_0|\mathbf{y}_0, \mathbf{e}_{d})$ and the SDE dynamics \cite{hershey2007approximating}. Therefore, Eq. (\ref{eq:marginal likelihood}) becomes
	\begin{equation}\label{eq:marginal likelihood_next}
		p_\theta(\mathbf{Y}) = \int q_{\phi}(\mathbf{Z}|\mathbf{Y}) \frac{p_\theta(\mathbf{Y}|\mathbf{Z}) p_\theta(\mathbf{Z})}{q_{\phi}(\mathbf{Z}|\mathbf{Y})} d\mathbf{Z}.
	\end{equation}
	
	Now, taking the logarithm and applying Jensen's inequality to Eq. (\ref{eq:marginal likelihood_next}) as $\log$ is a concave function, the following inequality can be obtained:
	\begin{equation}\label{eq:ELBO inequality}
		\log p_\theta(\mathbf{Y}) \ge \mathbb{E}_{q_{\phi}(\mathbf{Z}|\mathbf{Y})} \left[ \log \frac{p_\theta(\mathbf{Y}|\mathbf{Z}) p_\theta(\mathbf{Z})}{q_{\phi}(\mathbf{Z}|\mathbf{Y})} \right].
	\end{equation}
	
	This lower bound in Eq. (\ref{eq:ELBO inequality}) is the ELBO, which further can be simplified as:
	\begin{equation}\label{eq:elbo loss open}
		\text{ELBO}(\phi, \theta; \mathbf{Y}) = \mathbb{E}_{q_{\phi}(\mathbf{Z}|\mathbf{Y})} [\log p_\theta(\mathbf{Y}|\mathbf{Z})] - \mathbb{E}_{q_{\phi}(\mathbf{Z}|\mathbf{Y})} [\log \frac{q_{\phi}(\mathbf{Z}|\mathbf{Y})}{p_\theta(\mathbf{Z})}].
	\end{equation}
	
	Recognizing the second term in Eq. (\ref{eq:elbo loss open}) as the KL divergence, $D_{KL}(q_{\phi}(\mathbf{Z}|\mathbf{Y}) || p_\theta(\mathbf{Z}))$, the ELBO can be written as:
	\begin{equation}\label{eq:elbo}
		\text{ELBO}(\phi, \theta; \mathbf{Y}) = \mathbb{E}_{q_{\phi}(\mathbf{Z}|\mathbf{Y})} [\log p_\theta(\mathbf{Y}|\mathbf{Z})] - D_{KL}(q_{\phi}(\mathbf{Z}|\mathbf{Y}) || p_\theta(\mathbf{Z})).
	\end{equation}
	According to Eq. (\ref{eq:ELBO inequality}), maximizing ELBO is equivalent to maximize the likelihood $p_\theta(\mathbf{Y})$. Further, it is equivalent to minimizing the negative ELBO during training, given by:
	\begin{equation}\label{eq:final loss analytic}
		\mathcal{L}(\phi, \theta; \mathbf{Y}) = -\text{ELBO}(\phi, \theta; \mathbf{Y}) = \underbrace{-\mathbb{E}_{q_{\phi}(\mathbf{Z}|\mathbf{Y})} [\log p_\theta(\mathbf{Y}|\mathbf{Z})]}_{\text{Reconstruction Loss (NLL)}} + \beta \underbrace{D_{KL}(q_{\phi}(\mathbf{Z}|\mathbf{Y}) || p_\theta(\mathbf{Z}))}_{\text{KL Divergence Term}}.
	\end{equation}
	
Here, reconstruction loss term measures how well the decoder can reconstruct the observed data $\mathbf{Y}$ from the sampled latent trajectory $\mathbf{Z}$. In Gaussian likelihood setting, this typically translates to minimizing a form of Mean Squared Error or, more accurately, the Gaussian Negative Log-Likelihood (NLL). $\beta$ represents beta annealing, which is a technique recommended to prevent ``posterior collapse" by gradually increasing the weight of the KL divergence term during training.\\
	
The KL divergence term measures the difference between the approximate posterior distribution $q_{\phi}(\mathbf{Z}|\mathbf{Y})$, (the distribution over latent states inferred by the encoder and SDE dynamics) and the prior distribution $p(\mathbf{Z})$ (the assumed distribution over latent states, typically a simple one like a standard normal distribution). Minimizing this term encourages the approximate posterior to stay close to the prior, preventing the latent space from becoming too complex or irregular.
	
\subsubsection{Weight Updates with Adjoint Method}
Training a deep learning model involves adjusting its parameters (weights and biases) to minimize a loss function. This is typically achieved using gradient-based optimization techniques like Stochastic Gradient Descent (SGD) or Adam, which require computing the gradients of the loss with respect to all model parameters via backpropagation.\\
	
In the V-NSDE model, the latent trajectory $\mathbf{Z} = \{\mathbf{z}_t\}_{t=0}^T$ is generated by numerically integrating an SDE. Standard backpropagation through reverse-mode automatic differentiation (AD) would need to store the entire history of intermediate computations (activations and gradients) from the SDE solver at each time step. For SDEs simulated over many time steps, this can lead to an enormous memory footprint and computational cost, making training impractical.\\

In this regard, the adjoint sensitivity method provides an efficient way to compute gradients through DE solvers without storing all intermediate states. Instead of backpropagating through each step of the SDE solver, it solves a coupled backward-in-time DE, known as the adjoint equation. This adjoint equation describes how the gradients of the loss function with respect to the latent state evolve backward in time. The gradients with respect to the SDE parameters ($f_{\theta}$ and $g_{\theta}$'s weights) can then be computed by integrating another auxiliary DE that accumulates these gradients along the backward pass. Therefore, the adjoint sensitivity method is used for memory-efficient computation of gradients through the SDE solver. 
\subsubsection*{Overall Backpropagation Flow}
\begin{enumerate}
	\item \textbf{Decoder Gradients:} Using AD, gradients are first computed for the parameters ($\theta$) of decoder based on the Gaussian NLL reconstruction loss, with respect to its inputs ($\mathbf{z}_t$ and $\mathbf{e}_{d}$). This provides the initial gradients $\partial L / \partial \mathbf{z}_t$ and $\partial L / \partial \mathbf{e}_{d}$.
	\item \textbf{SDE Adjoint Gradients:} The gradients $\partial L / \partial \mathbf{z}_t$ for each time point are then passed to the adjoint SDE solver. The adjoint method efficiently computes $\partial L / \partial \mathbf{z}_0$ (gradients with respect to the initial latent state) and $\partial L / \partial \theta_f$, $\partial L / \partial \theta_g$ (gradients with respect to the drift and diffusion network parameters) by integrating the adjoint system backward in time. This is crucial for memory efficiency, especially with long time series.
	\item \textbf{Encoder Gradients:} Finally, $\partial L / \partial \mathbf{z}_0$ is used to compute gradients for the parameters ($\phi$) of encoder with respect to $\boldsymbol{\mu}_{z_0}$ and $\log \boldsymbol{\sigma}^2_{z_0}$ through the reparameterization trick, which then backpropagate through the encoder network to update $\phi$.
	\item \textbf{District Embedding Gradients:} Gradients for the district embedding $\mathbf{e}_{d}$ are accumulated from the encoder, SDE, and decoder, and then used to update in the embedding layer.
\end{enumerate}	
This end-to-end backpropagation process, facilitated by adjoint methods for the SDE part, allows the entire VAE-SDE model to be trained effectively using gradient descent.
\section{Experimental Setup}
In this section, we will define and solve a socioeconomic dynamical problem utilizing the described method. 
\subsection{Problem Description}
The dataset consists of monthly observations over 168 months ($T = 168$) for six socioeconomic indicators across 30 districts in Odisha, India. These indicators include access to electricity, education attainment, housing availability, piped clean drinking water, clean cooking fuel, and improved sanitation facilities. In this regard, data was collected for year 2007 from District Level Household and Facility Survey (DLHS)-3 \cite{dlhs3_dataset} and for year 2015 and 2020 from National Family Health Survey (NFHS)-4 and NFHS-5 \cite{dhs_dataset} regarding the said socioeconomic indicators. \\

To make it a stochastic dynamical system, month-wise synthetic data between survey years were generated using a Brownian bridge interpolation scheme, which preserves the observed annual survey values while allowing stochastic fluctuations between them. This approach avoids unrealistic linear interpolation and reflects unobserved short-term socioeconomic variability.\\

A Brownian Bridge from $(t_a,x_a)$ to $(t_b,x_b)$ can be defined by Eq. (\ref{eq:brownian bridge}) \cite{ross1995stochastic}:
\begin{equation}\label{eq:brownian bridge}
	x_t = x_a + \frac{t - t_a}{t_b -t_a}(x_b-x_a) + \sigma \left[W(t-t_a)- \frac{t - t_a}{t_b -t_a}W(t_b-t_a)\right],
\end{equation}
where $W(t)$ be standard Brownian motion, $\sigma > 0$ controls the volatility.
\subsection{Computational Environment Settings}
This subsection will cover all the settings applied in the methodology, such as input, latent, and district embedding dimensions, probabilistic distribution for encoder and decoder, architecture details of all involved NNs, etc. \\

To begin with, the embedding layer consisting of 16 neurons ($m = 16$) is initialized for each district ID. There are six socioeconomic variables ($N = 6$) affecting the poverty status of households in a district, which include
\begin{enumerate}
	\item Percentage of households that have an electricity connection,
	\item Percentage of population obtained education at level 10 or more,
	\item Percentage of households that own a concrete-made house,
	\item Percentage of households that have a piped clean water connection in their yard,
	\item Percentage of households that use clean cooking fuel (gas facility in the kitchen),
	\item Percentage of households that have improved sanitation facilities (toilet with flush).
\end{enumerate}
The encoder takes 22 input neurons, which include six socioeconomic indicators and 16 district embedding vectors, two hidden layers with SiLU activation, and an output layer with linear activation of 8 neurons, which includes 4-dimensional (n = 4) latent vectors for mean and variance.\\ 

In the latent-NSDE, the drift $f_\theta$ and diffusion $g_\theta$ networks consider 20 input neurons (latent + district embedding), two hidden layers of 64 neurons with SiLU activation, and an output layer with four neurons. Linear activation is utilized in the drift network, and $\tanh$ activation is used in the diffusion network to control the magnitude of diffusion. \\

Finally, the decoder takes 20 input neurons, which include four latent dimension vectors and 16 district embedding vectors, two hidden layers with SiLU activation, and an output layer with linear activation of 12 neurons, which includes 6-dimensional socioeconomic dimension vectors for mean and variance.
\subsubsection*{Prior distribution}
The standard normal distribution is the most common choice for the prior distribution over the latent space, given by
\begin{equation}\label{eq:prior distribution}
	p(\mathbf{z}_0) = \mathcal{N}(\mathbf{z}_0; \mathbf{0}, \mathbf{I}),
\end{equation}
where $\mathbf{0}$ is a vector of zeros and $\mathbf{I}$ is the identity matrix, implying a mean of zero and unit variance for each independent latent dimension.
\subsubsection*{Negative log likelihood estimation}
The NLL from Eq. (\ref{eq:final loss analytic}) can be written as
\begin{equation}\label{eq:NLL General}
	-\mathbb{E}_{q_{\phi}(\mathbf{Z}|\mathbf{Y})} [\log p_\theta(\mathbf{Y}|\mathbf{Z})] = -\int\log p_\theta(\mathbf{Y}|\mathbf{Z}) q_{\phi}(\mathbf{Z}|\mathbf{Y})d\mathbf{Z}.
\end{equation}
This integral is intractable, therefore using Monte Carlo approximation we can write Eq. (\ref{eq:NLL General}) as 
\begin{equation}
	\mathbb{E}_{q_{\phi}(\mathbf{Z}|\mathbf{Y})} [-\log p_\theta(\mathbf{Y}|\mathbf{Z})] \approx \frac{1}{L}\sum_{l=0}^{L-1}-\log p_\theta(\mathbf{Y}|\mathbf{Z}^l),
\end{equation}
where $\mathbf{Z}^l \sim  q_{\phi}(\mathbf{Z}|\mathbf{Y})$. In practice, single unbiased Monte Carlo estimator is used, as a result final NLL looks likes 
\begin{equation}
	\begin{aligned}
		\mathbb{E}_{q_{\phi}(\mathbf{Z}|\mathbf{Y})} [-\log p_\theta(\mathbf{Y}|\mathbf{Z})] &\approx -\log p_\theta(\mathbf{Y}|\mathbf{Z})\\
		&= \frac{1}{2NT} \sum_{t=0}^{T-1}\sum_{j=1}^{N} \left[ \log(2\pi) + \log \sigma^2_{y_t,j} + \frac{(y_{t,j} - \mu_{y_t,j})^2}{\sigma^2_{y_t,j}} \right]
	\end{aligned}
\end{equation}
The constant term $\log(2\pi)$ is often omitted during optimization as it does not affect the gradients. Therefore, the practical NLL is simplifies to
\begin{equation}\label{eq:model nll}
	\mathbb{E}_{q_{\phi}(\mathbf{Z}|\mathbf{Y})} [-\log p_\theta(\mathbf{Y}|\mathbf{Z})] \approx
	 \frac{1}{2NT} \sum_{t=0}^{T-1}\sum_{j=1}^{N} \left[ \log \sigma^2_{y_t,j} + \frac{(y_{t,j} - \mu_{y_t,j})^2}{\sigma^2_{y_t,j}} \right],
\end{equation}
where $y_{t,j}$ is the actual observed value for feature $j$ at time $t$, $\mu_{y_t,j}$ is the predicted mean for feature $j$ at time $t$ by the decoder, and $\log \sigma^2_{y_t,j}$ is the predicted log-variance for feature $j$ at time $t$ by the decoder.\\

This loss function encourages the decoder to predict the correct mean and learn appropriate variances for uncertainty estimates in the predictions. If the model is very confident and incorrect, the NLL will be high. If it is uncertain and correct, the NLL will be lower.

\subsubsection{KL divergence term calculation}
In the VAE framework, the KL divergence term in the ELBO encourages the approximate posterior distribution $q_{\phi}(\mathbf{z}|\mathbf{x})$ to be close to a predefined prior distribution $p(\mathbf{z})$. Here in V-NSDE model, this KL divergence specifically applies to the initial latent state $\mathbf{z}_0$.\\

In our case, $q_{\phi}(\mathbf{z}_0 | \mathbf{y}_0, \mathbf{e}_{d})$ and $p(\mathbf{z}_0) $ follows distributions defined in Eq. (\ref{eq:posterior distribution phi}, \ref{eq:prior distribution}). Therefore, the closed form is given by
\begin{equation}\label{eq:model KL}
 D_{KL}(q_{\phi}(\mathbf{z}_0|\mathbf{y}_0, \mathbf{e}_{d}) || p(\mathbf{z}_0)) =\frac{1}{2n} \sum_{j=1}^{n} \left[ -\log \sigma^2_{z_0,j} - 1 + \sigma^2_{z_0,j} + \mu^2_{z_0,j} \right],
\end{equation}
where $\sigma^2_{z_0,j}$ denotes the $j$-th element of $\exp(\log \boldsymbol{\sigma}^2_{z_0})$.\\

In our specific V-NSDE model, the KL divergence term is calculated only for the initial latent state $\mathbf{z}_0$ because the subsequent evolution of $\mathbf{z}_t$ is governed by the SDE dynamics, which defines the continuous-time prior over the latent trajectory $p(\mathbf{Z})$. Therefore, the KL divergence term specifically focuses on $D_{KL}(q_{\phi}(\mathbf{z}_0|\mathbf{y}_0, \mathbf{e}_{d}) || p_\theta(\mathbf{z}_0))$. The reconstruction loss, however, spans the entire time series, comparing the predicted $\mathbf{y}_t$ distributions with the observed $\mathbf{y}_t$ for all $t$. The total ELBO loss considered in the model is given by Eqs. \eqref{eq:model nll}-\eqref{eq:model KL} as
\begin{equation}
	\begin{aligned}
		\mathcal{L}(\phi, \theta; \mathbf{Y}) &= \frac{1}{2NT} \sum_{t=0}^{T-1}\sum_{j=1}^{N} \left[ \log \sigma^2_{y_t,j} + \frac{(y_{t,j} - \mu_{y_t,j})^2}{\sigma^2_{y_t,j}} \right]\\
		 & \quad \quad + \beta . \frac{1}{2n} \sum_{j=1}^{n} \left[ -\log \sigma^2_{z_0,j} - 1 + \sigma^2_{z_0,j} + \mu^2_{z_0,j} \right].
	\end{aligned}
\end{equation}

In the backpropagation as described in Section \ref{sec. bacward model}, for the optimization propose we have considered Adam optimizer with default learning rate of $0.001$ \cite{kingma2017adam}. The model was trained for $1000$ epochs till the ELBO loss got stabilized. 
\section{Results and Discussion}
This section presents the results obtained from the trained V-NSDE model applied to the stochastic socioeconomic indicator data of Odisha. The model demonstrates stable and effective training behavior.

\subsection{Model Training and Convergence}

The V-NSDE model exhibits stable and well-behaved training dynamics. The ELBO loss decreases smoothly  within 1000 training epochs, indicating effective optimization and numerical stability. At the end of training, the final ELBO value $-10.746209$ is obtained, demonstrating a satisfactory balance between data reconstruction accuracy and latent space regularization.\\

The ELBO objective is primarily dominated by the Gaussian NLL term that reflects accurate reconstruction of the observed socioeconomic trajectories. The KL divergence term increases gradually during training due to the application of $\beta$-annealing and later gets stable. Which explains that it successfully prevented the posterior collapse and encouraged meaningful latent representations. As shown in Fig. (\ref{fig:loss-curve-nsde}), the overall convergence behavior confirms that the V-NSDE model is well-trained and suitable for capturing the stochastic dynamics present in district-level socioeconomic data.
\begin{figure}
	\centering
	\includegraphics[width=0.9\linewidth]{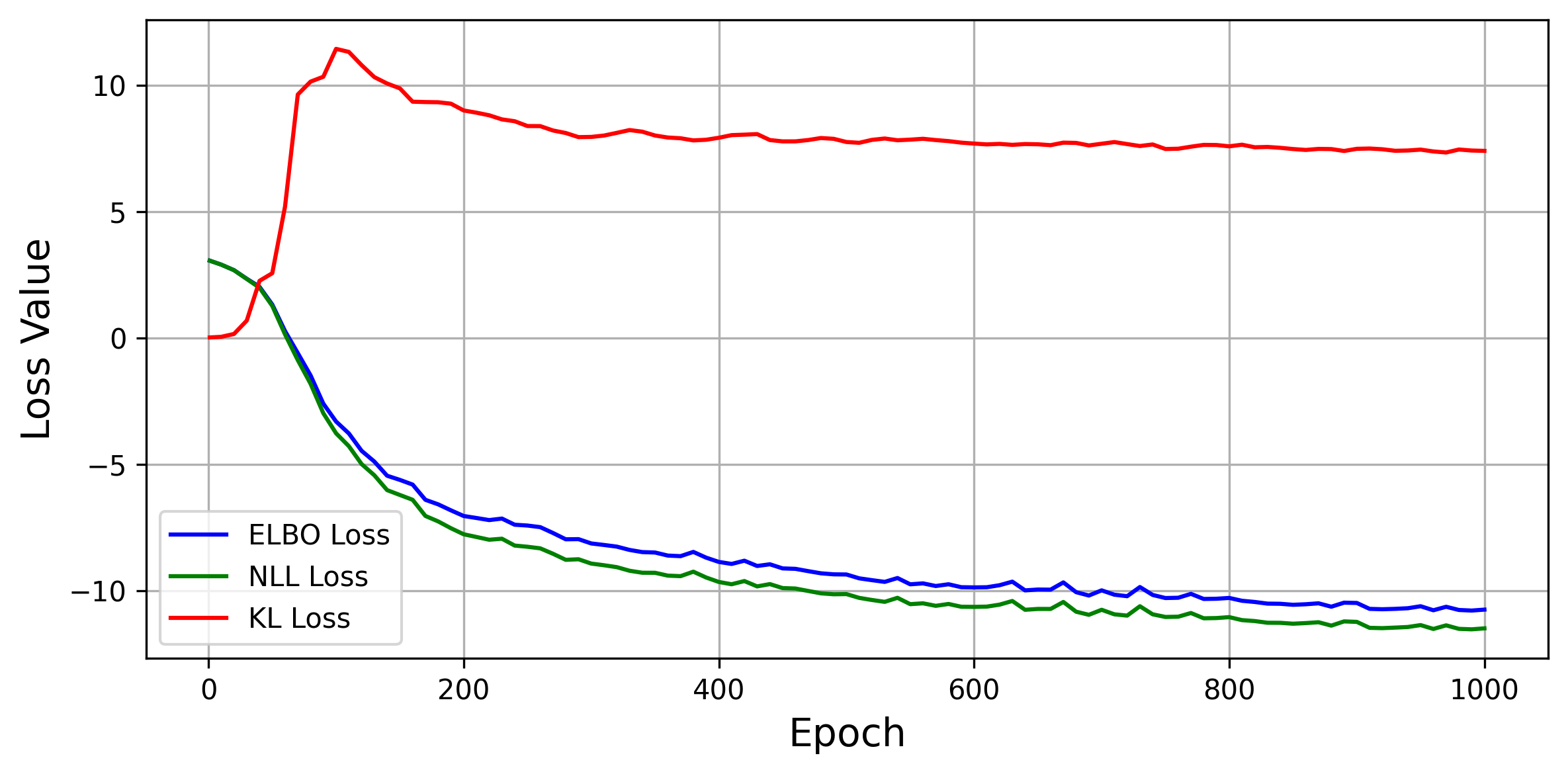}
	\caption{Training losses over epochs}
	\label{fig:loss-curve-nsde}
\end{figure}

\subsection{District-wise Quantitative Model Fit Using NLL}
To quantitatively assess model performance across districts, as depicted in Table \ref{tab:NLL performance}, the Gaussian NLL was computed separately for each of the 30 districts of Odisha. For each district, the trained V-NSDE model predicted the mean and log-variance of the time series. The NLL is evaluated by comparing the actual normalized observations against these probabilistic predictions obtained from the V-NSDE. These results highlight the heterogeneous predictive performance of the model across districts. These results also justify the necessity of district-specific latent embedding representations within a single V-NSDE modeling framework.

\begin{table}
	\caption{NLL performance across districts on trained model}
	\centering
	\begin{tabular}{|l c|l c|}
		\hline
		\textbf{District}&\textbf{ NLL} & \textbf{District}  & \textbf{NLL} \\
		\hline
		Rayagada & -12.599152&Debagarh& -11.589890\\ \hline
		Puri &-12.499263&Nayagarh& -11.562165\\ \hline
		Nuapada &-12.492243&Sonapur& -11.537984\\ \hline
		Bhadrak & -12.302701& Ganjam& -11.523965\\ \hline
		Cuttack& -12.268328&Balangir& -11.468771\\ \hline
		Malkangiri& -12.147593&Baleshwar& -11.392648\\ \hline
		Khandamal& -12.119408&Kendrapara& -11.343279\\ \hline
		Kendujhar& -11.975780&Kalahandi& -11.311354\\ \hline
		Nabarangapur& -11.891778&Jagatsinghpur& -11.011625\\ \hline
		Sundargarh& -11.889838& Gajapati& -10.979132\\ \hline
		Sambalpur& -11.875984&Baudh& -10.938087\\ \hline
		Koraput& -11.737288&Dhenkanal& -10.704522\\ \hline
		Bargarh& -11.737288&Jajapur& -10.615693\\ \hline
		Khordha &-11.681852& Angul& -10.251976\\ \hline
		Mayurbhanj& -11.663747&Jharsuguda& -10.069868\\ 
		\hline
		\multicolumn{4}{|c|}{ $\text{\textbf{Total ELBO}} = \text{Mean NLL} + \beta \times \text{Mean KL divergence}$} \\
		\hspace{0.5cm} -10.746209	& \multicolumn{3}{l|}{ $=-11.487712 +0.1 \times 7.415025$} \\
		\hline
	\end{tabular}
	\label{tab:NLL performance}
\end{table}
\subsection{Mean Trajectory Prediction and Temporal Trend Preservation}
Across all considered socioeconomic variables (education, housing, sanitation, drinking water access, clean cooking fuel, and electricity), the predicted mean trajectories show strong visual agreement with the observed district-level data. The V-NSDE model consistently preserves long-term temporal trends, accurately reflecting increasing or decreasing patterns over the study period. For visualization of such, the figures of three KBK (Koraput, Balangir, and Kalahandi) region districts are shown in Figures \ref{fig:balangirprediction}--\ref{fig:kalahandiprediction}.\\

The V-NSDE model successfully captures key turning points in the trajectories, which may be due to some reasons such as policy interventions, survey periods, or structural shifts. This demonstrates the ability of the stochastic latent dynamics to adapt to both smooth evolutions and volatility in the data trend.
\begin{figure}
	\centering
	\includegraphics[width=0.9\linewidth]{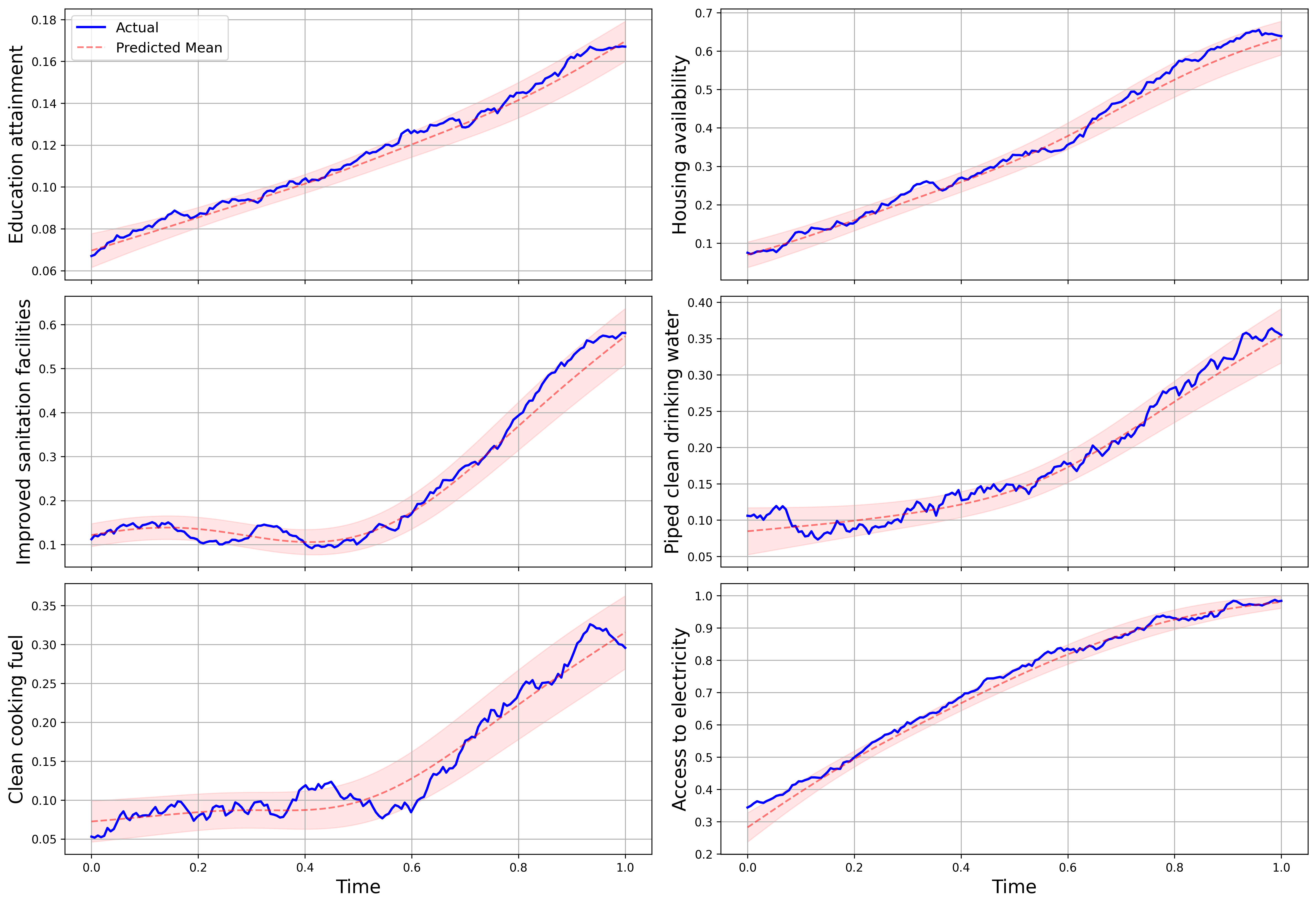}
	\caption{Likelihood predictions for Balangir District}
	\label{fig:balangirprediction}
\end{figure}
\begin{figure}
	\centering
	\includegraphics[width=0.9\linewidth]{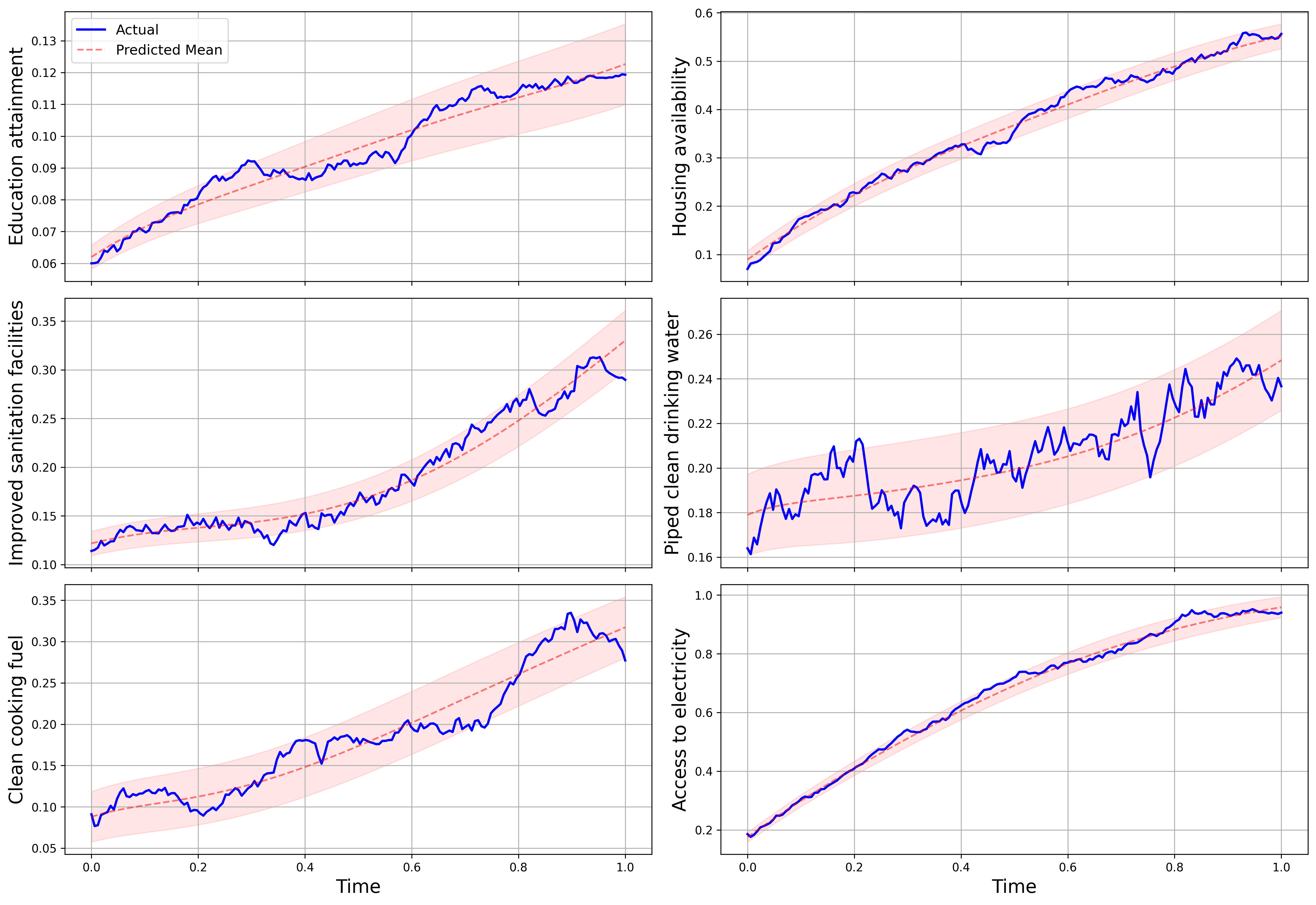}
	\caption{Likelihood predictions for Koraput District}
	\label{fig:koraputprediction}
\end{figure}
\begin{figure}
	\centering
	\includegraphics[width=0.9\linewidth]{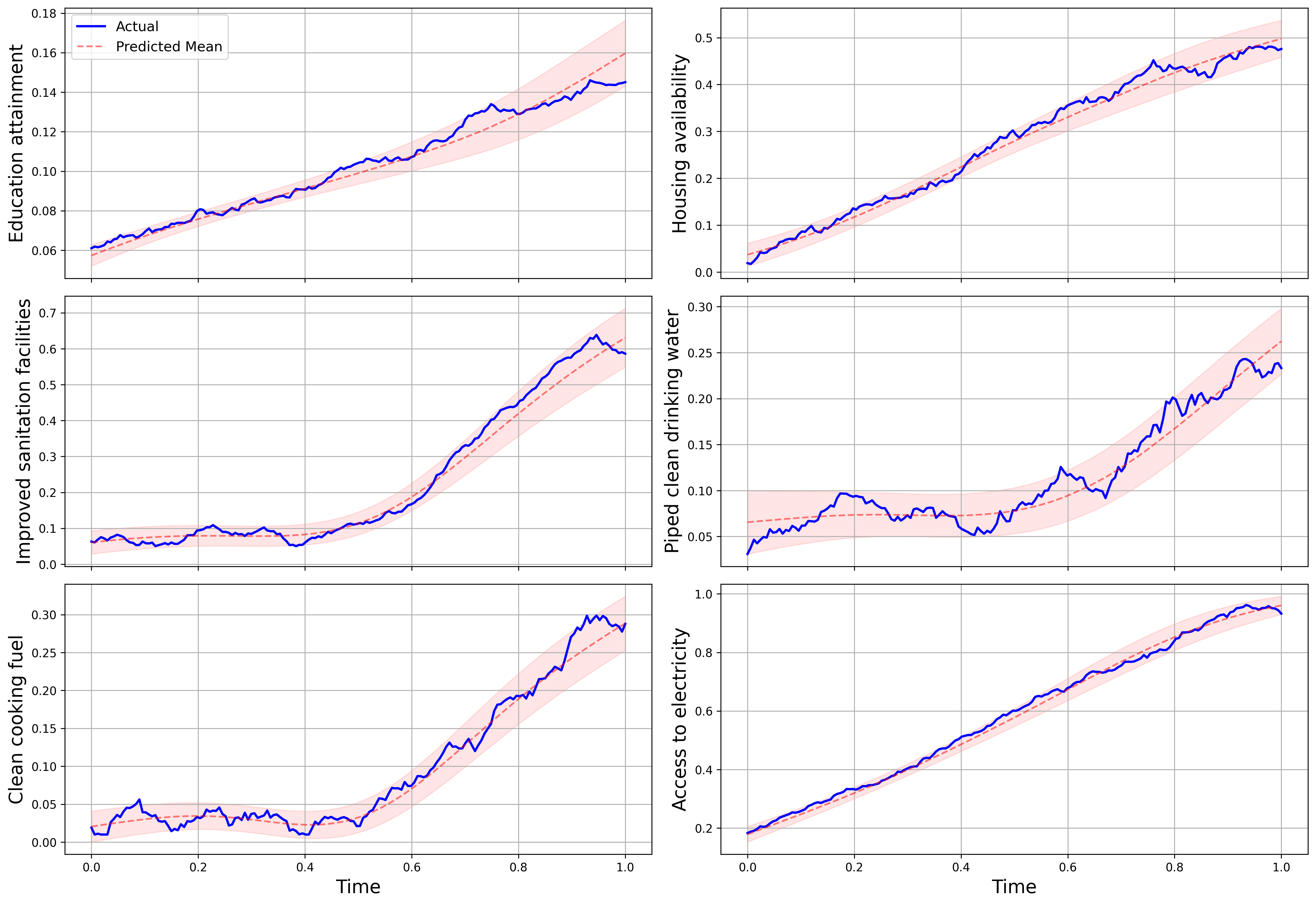}
	\caption{Likelihood predictions for Kalahandi District}
	\label{fig:kalahandiprediction}
\end{figure}

A major advantage of the proposed V-NSDE framework lies in its principled uncertainty quantification. The model outputs mean $\mathbf{\mu}$ and log-variance $\log \mathbf{\sigma^2}$ as shown by Eq. \eqref{eq:decoder}. Therefore, we have also predicted $\pm 2\sigma$ confidence intervals around the mean trajectories which effectively captures the inherent stochasticity and data uncertainty present in the socioeconomic indicators.\\

Interestingly, almost all observed data points lie within the predicted uncertainty bands, with only rare instances of observations falling outside these intervals. Also, wider predictive intervals are consistently observed during data-sparse periods and in phases exhibiting higher volatility. This pattern aligns with real-world expectations for irregularly sampled and noisy socioeconomic data. This behavior confirms that the model avoids overconfident predictions and provides realistic probabilistic forecasts.\\

The trained model also reveals clear district-level heterogeneity in both drift behavior and volatility structure. Different districts shows distinct temporal evolution patterns which reflects variations in socioeconomic development pathways. For example, backward and less-developed KBK region districts, display higher predictive uncertainty and increased volatility in drinking water, cooking fuel, and sanitation facilities indicator as depicted in Figures \ref{fig:balangirprediction}, \ref{fig:koraputprediction}, and \ref{fig:kalahandiprediction}.\\

These qualitative yet consistent pattern across districts shows that the V-NSDE is able to effectively learns district-specific characteristics through latent embeddings. It allows a single unified model to represent diverse regional dynamics without requiring separate district-wise models.\\

The empirical analysis yields several important insights:
\begin{enumerate}
	\item Education-related indicators exhibit relatively smoother temporal dynamics with lower predictive uncertainty across districts.
	\item Housing and sanitation indicators show higher volatility, reflecting sensitivity to infrastructural and policy-driven changes.
	\item Backward districts consistently display wider predictive intervals, indicating greater uncertainty and structural variability.
	\item The district-wise NLL analysis provides a clear quantitative measure of model fit, complementing the qualitative trajectory and uncertainty assessments.
\end{enumerate}

The results demonstrate that the proposed V-NSDE framework effectively captures mean dynamics, uncertainty structure, and district-level heterogeneity in socioeconomic time series. This makes the model a robust and interpretable tool for modeling complex and irregularly sampled socioeconomic processes.
\section{Conclusion}
In this study, we proposed a V-NSDE framework for modeling the temporal dynamics of multiple socioeconomic indicators related to household characteristics. The proposed approach formulates socioeconomic evolution as a continuous-time stochastic dynamical system, explicitly accounting for uncertainty arising from measurement errors, policy shocks, and unobserved heterogeneity.\\

The V-NSDE framework integrates VAEs to project high-dimensional, irregularly sampled socioeconomic data into a low-dimensional latent space, where the underlying dynamics are governed by a Neural Stochastic Differential Equation with neural network-parameterized drift and diffusion functions. This latent representation significantly reduces model complexity while preserving essential structural information. The latent SDE is solved using standard numerical SDE solvers, and the observed variables are reconstructed through a decoder network. Model training is carried out by maximizing the Evidence Lower Bound (ELBO), providing a fully probabilistic learning objective that balances reconstruction accuracy and latent regularization.\\

Empirical results using district-level data from Odisha, India, demonstrate that the proposed model effectively captures both the long-term trends and short-term stochastic fluctuations of key socioeconomic indicators. In particular, the predicted mean trajectories closely follow the observed temporal patterns across all districts, while the associated predictive uncertainty-quantified through the estimated standard deviation bands-successfully encloses the observed data. This indicates that the model not only achieves accurate point predictions but also provides meaningful uncertainty estimates. Notably, stable convergence was achieved within 250 training epochs, with the ELBO reaching values around $-11$, highlighting the computational efficiency and practical feasibility of the proposed framework.\\

Beyond the specific case study of Odisha, the proposed V-NSDE framework is general and transferable. It is well suited to a broad class of socioeconomic and development-related dynamical problems, particularly in settings where data are scarce, noisy, or irregularly sampled, as is common in household surveys and administrative records in developing regions. By enabling continuous-time modeling with uncertainty quantification, the approach supports more reliable policy analysis, forecasting, and scenario evaluation.\\

Despite its advantages, the current study has certain limitations. The model assumes time-invariant district embeddings and does not explicitly incorporate exogenous covariates such as policy interventions or macroeconomic shocks. Moreover, model performance may be sensitive to the choice of latent dimension and network architecture, which were selected empirically.\\

Future work will focus on extending the framework to include time-varying embeddings, exogenous control variables, and hierarchical spatial dependencies across districts. Additionally, integrating causal constraints and exploring policy-counterfactual simulations within the V-NSDE framework represent promising directions. These extensions will further enhance the applicability of the proposed methodology for evidence-based socioeconomic planning and decision-making.

\appendix
\section{Existence and Uniqueness of Latent NSDEs with District Embeddings}
\label{app:existence_uniqueness_latent_nsde}

\subsection{Probabilistic Setup}

Let $(\Omega, \mathcal{F}, \{\mathcal{F}_t\}_{t\ge0}, \mathbb{P})$ be a complete filtered probability space satisfying the usual conditions, and
\[
W_t = (W_t^{(1)}, \dots, W_t^{(m)})
\]
be an $m$-dimensional standard Brownian motion adapted to the filtration $\{\mathcal{F}_t\}$. All stochastic processes considered are assumed to be adapted to this filtration.
\subsection{Latent NSDE with District Embeddings}

We consider the latent NSDE as
\begin{equation}
	\label{eq:latent_nsde}	
	\begin{aligned}
		dZ_t^{(d)} &= f_\theta\!\left(Z_t^{(d)}, t, e_d\right)\,dt
		+ g_\theta\!\left(Z_t^{(d)}, t, e_d\right)\,dW_t^{(d)}, \\
		Z_0^{(d)} &\sim q_\phi(Z_0 \mid Y_0^{(d)}, e_d),
	\end{aligned}
\end{equation}
where $Z_t^{(d)} \in \mathbb{R}^k$ is the latent state for district $d$, $e_d \in \mathbb{R}^r$ is a fixed district embedding vector, $f_\theta:\mathbb{R}^k \times [0,T] \times \mathbb{R}^r \to \mathbb{R}^k$ is the neural drift, $g_\theta:\mathbb{R}^k \times [0,T] \times \mathbb{R}^r \to \mathbb{R}^{k\times m}$ is the neural diffusion, $W_t^{(d)}$ is an $m$-dimensional Brownian motion, and $q_\phi$ is an encoder-defined initial latent distribution.

\subsection{Assumptions}

\paragraph{Assumption A (Embedding Regularity).}
District embeddings are bounded, i.e.
\begin{equation}
	\label{eq:embedding_bound}
	\|e_d\| \le M_e < \infty, \quad \forall d.
\end{equation}

\paragraph{Assumption B (Measurability and Adaptedness).}
The functions $f_\theta(z,t,e_d)$ and $g_\theta(z,t,e_d)$ are Borel measurable in $(\mathbb{R}^k; [0,T];\mathbb{R}^r)$, progressively measurable with respect to $\{\mathcal{F}_t\}$, and continuous in $\mathbb{R}^k$ for almost every $t$.

\paragraph{Assumption C (Global Lipschitz Continuity).}
There exists a constant $L>0$ such that for all $z_1,z_2 \in \mathbb{R}^k$, all $t\in[0,T]$, and all embeddings $e_d$,
\begin{equation}
	\label{eq:lipschitz}
	\|f_\theta(z_1,t,e_d) - f_\theta(z_2,t,e_d)\| + \|g_\theta(z_1,t,e_d) - g_\theta(z_2,t,e_d)\| \le L \|z_1 - z_2\|.
\end{equation}
\paragraph{Assumption D (Linear Growth Condition).}
There exists $C>0$ such that
\begin{equation}
	\label{eq:linear_growth}
	\|f_\theta(z,t,e_d)\|^2 + \|g_\theta(z,t,e_d)\|^2
	\le C\big(1 + \|z\|^2 + \|e_d\|^2\big),
	\quad \forall z,t,d.
\end{equation}
\paragraph{Assumption E (Finite Second Moment of Initial Condition).}
The initial latent variable satisfies
\begin{equation}
	\label{eq:initial_condition}
	\mathbb{E}\|Z_0^{(d)}\|^2 < \infty.
\end{equation}

\subsubsection{Existence and Uniqueness Theorem}

\begin{theorem}[Existence and Uniqueness of Latent NSDE]
	\label{thm:latent_nsde_existence}
	Under Assumptions A--E, for each district $d$, the latent  NSDE in Eq. \eqref{eq:latent_nsde} admits a \textbf{unique strong solution}
	\[
	\{Z_t^{(d)}\}_{t\in[0,T]}
	\]
	such that $Z_t^{(d)}$ is $\mathcal{F}_t$-adapted and almost surely continuous and $\mathbb{E}\left[\sup_{t\in[0,T]} \|Z_t^{(d)}\|^2\right] < \infty$.
\end{theorem}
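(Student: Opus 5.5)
Fix a district $d$ and regard $e_d$ as a frozen parameter. Under Assumption A the map $(z,t)\mapsto (f_\theta(z,t,e_d),g_\theta(z,t,e_d))$ is then an ordinary time-dependent coefficient pair on $\mathbb{R}^k\times[0,T]$. Assumption C gives the global Lipschitz bound in $z$ with constant $L$, uniformly in $t$. Assumption D together with \eqref{eq:embedding_bound} gives $\|f_\theta\|^2+\|g_\theta\|^2\le C(1+M_e^2)(1+\|z\|^2)$, which is a standard linear growth bound with constant $C'=C(1+M_e^2)$. The initial value $Z_0^{(d)}$ is drawn from $q_\phi$ independently of $W^{(d)}$, so I take it to be $\mathcal{F}_0$-measurable. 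Assumption E gives it a finite second moment. So the plan is to reduce the statement to the classical existence and uniqueness theorem for It\^{o} SDEs (as in \cite{oksendal2013stochastic}). I will nonetheless sketch the Picard argument, because the random initial condition and the uniform-in-$d$ constants deserve care.

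\textbf{Existence.} Define $Z^{0}_t\equiv Z_0^{(d)}$ and
\[
Z^{n+1}_t = Z_0^{(d)}+\int_0^t f_\theta(Z^n_s,s,e_d)\,ds+\int_0^t g_\theta(Z^n_s,s,e_d)\,dW^{(d)}_s .
\]
Assumption B and induction show that each iterate is adapted and continuous. Linear growth and the second moment of $Z_0^{(d)}$ show that each iterate lies in the space $S^2$ of continuous adapted processes with $\mathbb{E}\sup_{t\le T}\|Z_t\|^2<\infty$. For the differences, combine Cauchy--Schwarz on the $ds$ integral, Doob's $L^2$ maximal inequality (or BDG) on the stochastic integral, and Assumption C. This gives
\[
\mathbb{E}\sup_{s\le t}\|Z^{n+1}_s-Z^n_s\|^2\le K\int_0^t \mathbb{E}\sup_{r\le s}\|Z^{n}_r-Z^{n-1}_r\|^2\,ds,
\qquad K=2L^2(T+4),
\]
and iterating yields the bound $K^n T^n/n!\cdot \mathbb{E}\sup\|Z^1-Z^0\|^2$. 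Summability together with Chebyshev and Borel--Cantelli gives almost sure uniform convergence on $[0,T]$ to a continuous adapted limit $Z^{(d)}$; the same bound gives convergence in $S^2$. Passing to the limit in the recursion, using $L^2$ isometry for the stochastic term, shows that $Z^{(d)}$ solves \eqref{eq:latent_nsde}.

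\textbf{Moment bound and uniqueness.} Apply the same estimates to $Z^{(d)}$ itself, now with linear growth in place of Lipschitz continuity:
\[
\mathbb{E}\sup_{s\le t}\|Z_s\|^2\le 3\mathbb{E}\|Z_0\|^2+3C'(T+4)\int_0^t\bigl(1+\mathbb{E}\sup_{r\le s}\|Z_r\|^2\bigr)ds .
\]
Gronwall then gives $\mathbb{E}\sup_{t\le T}\|Z_t\|^2\le (3\mathbb{E}\|Z_0\|^2+3C'T(T+4))e^{3C'T(T+4)}<\infty$. This step is where I expect the only real obstacle. Gronwall needs the left side to be finite a priori, so I will first localize with the stopping times $\tau_R=\inf\{t:\|Z_t\|\ge R\}$, prove the bound for $Z_{t\wedge\tau_R}$, and then let $R\to\infty$ by Fatou. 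For uniqueness, take two continuous adapted solutions with the same initial value and Brownian motion, and localize the same way so that the second moments are finite. The Lipschitz estimate gives $\phi(t)\le K\int_0^t\phi(s)ds$ for $\phi(t)=\mathbb{E}\sup_{s\le t\wedge\tau_R}\|Z_s-\tilde Z_s\|^2$, so Gronwall forces $\phi\equiv0$. Letting $R\to\infty$ and using path continuity shows the two solutions are indistinguishable. Since the constants depend on $d$ only through $M_e$, the conclusion holds for every district simultaneously.
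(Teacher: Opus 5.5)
Your proposal is correct. It shares the paper's skeleton (Picard iteration for existence, Gr\"onwall for uniqueness), but the key estimates are done differently, and at each point your version is the one that actually delivers the theorem as stated. The paper bounds one Picard step by $\alpha=2L^2T(T+1)$ times the previous one and then \emph{chooses} $T$ small enough that $\alpha<1$. As written, that only produces a solution on a short horizon unless intervals are concatenated. Your iterated bound $(KT)^n/n!$ is summable for every $T$. It therefore gives existence directly on the whole interval (e.g.\ the 168-month horizon of the application), and Borel--Cantelli upgrades it to almost sure uniform convergence, hence path continuity. The paper also controls $\mathbb{E}\sup_{t\le T}\|\int_0^t\Delta g_s\,dW_s\|^2$ by the It\^{o} isometry alone, which is not valid for a supremum: already for $\Delta g\equiv 1$ in one dimension the left side strictly exceeds $T$. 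Your use of Doob's inequality, giving $K=2L^2(T+4)$, is the correct form. The paper also never uses Assumptions A, D, E and does not verify the moment bound. In the uniqueness step it applies Gr\"onwall without knowing $\mathbb{E}\sup_{t\le T}\|D_t\|^2<\infty$. Your reduction to $C'=C(1+M_e^2)$, the explicit moment estimate, and the stopping-time localization fill these gaps. The localization also gives uniqueness among all continuous adapted solutions, not only square-integrable ones. One minor remark: for the solution you construct, $\mathbb{E}\sup_{t\le T}\|Z_t\|^2<\infty$ already follows from convergence in the complete space $S^2$. The localized Gr\"onwall is needed there only for an explicit constant or for an arbitrary solution. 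The paper's contraction framing buys brevity; your factorial-bound route buys a complete argument on an arbitrary horizon, with Lipschitz and growth constants uniform in $d$.
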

\begin{proof}
In this proof we will follows standard arguments for It\^{o} SDEs with parameterized coefficients. Therefore, Eq. \eqref{eq:latent_nsde} is equivalent to the following It\^{o} integral
\begin{equation}
	\label{eq:integral_form}
	Z_t^{(d)} = Z_0^{(d)}
	+ \int_0^t f_\theta(Z_s^{(d)}, s, e_d)\,ds
	+ \int_0^t g_\theta(Z_s^{(d)}, s, e_d)\,dW_s^{(d)}.
\end{equation}
Using Picards iteration, we define the sequence $\{Z^{(n)}\}_{n\ge0}$ as
\begin{equation}
	\label{eq:picard}
	\begin{aligned}
		Z_t^{(0)} &= Z_0^{(d)}, \\
		Z_t^{(n+1)} &= Z_0^{(d)}
		+ \int_0^t f_\theta(Z_s^{(n)}, s, e_d)\,ds
		+ \int_0^t g_\theta(Z_s^{(n)}, s, e_d)\,dW_s^{(d)}.
	\end{aligned}
\end{equation}
To prove existence of solution, we will show that the Picard iteration is a Cauchy in
$L^2\big(\Omega; C([0,T];\mathbb{R}^k)\big)$.
\paragraph{Existence of strong solution.} From the Picard iteration, we have
\[
\begin{aligned}
	Z_t^{(n+1)} - Z_t^{(n)}
	&=
	\int_0^t \!\big[f_\theta(Z_s^{(n)},s,e_d)-f_\theta(Z_s^{(n-1)},s,e_d)\big] ds \\
	&\quad + \int_0^t \!\big[g_\theta(Z_s^{(n)},s,e_d)-g_\theta(Z_s^{(n-1)},s,e_d)\big] dW_s.
\end{aligned}
\]
Taking supremum over $t\in[0,T]$, expectation and using $\|a+b\|^2 \le 2\|a\|^2 + 2\|b\|^2$, we obtain
\[ 
\begin{aligned}
\mathbb{E}\left[\sup_{t\le T}
\left\| Z_t^{(n+1)} - Z_t^{(n)} \right\|^2 \right]
 &=  2 \mathbb{E}\left[\sup_{t\le T}
\left\| \int_0^t \!\big[f_\theta(Z_s^{(n)},s,e_d)-f_\theta(Z_s^{(n-1)},s,e_d)\big] ds \right\|^2  \right]  \\
&\quad +  2 \mathbb{E}\left[\sup_{t\le T}
\left\| \int_0^t \!\big[g_\theta(Z_s^{(n)},s,e_d)-g_\theta(Z_s^{(n-1)},s,e_d)\big] dW_s \right\|^2 \right] .
\end{aligned}
\]
This can also be written as
\begin{equation}\label{eq: expectation equality}
	\mathbb{E}\left[\sup_{t\le T}
	\left\| Z_t^{(n+1)} - Z_t^{(n)} \right\|^2 \right]
	=  2  \mathbb{E}\left[
	\sup_{t\le T}
	\left\|
	\int_0^t \Delta f_s ds
	\right\|^2
	\right] \\
	+ 2\mathbb{E}\left[
	\sup_{t\le T}
	\left\|
	\int_0^t \Delta g_s dW_s
	\right\|^2
	\right] .
\end{equation}
where 
\[\Delta f_s  = f_\theta(Z_s^{(n)},s,e_d)-f_\theta(Z_s^{(n-1)},s,e_d), \quad \Delta g_s  = g_\theta(Z_s^{(n)},s,e_d)-g_\theta(Z_s^{(n-1)},s,e_d).\]

For drift term integral, using Cauchy-Schwarz inequality we get
\[
\mathbb{E}\left[
\sup_{t\le T}
\left\|
\int_0^t \Delta f_s ds
\right\|^2
\right]
\le
T .\mathbb{E} \left[\sup_{t\le T}\int_0^t \|\Delta f_s\|^2 ds\right].
\]

For the stochastic integral, the Itô Isometry inequality yields
\[
\mathbb{E}\left[
\sup_{t\le T}
\left\|
\int_0^t \Delta g_s dW_s
\right\|^2
\right]
\le  \mathbb{E}
\left[\sup_{t\le T}\int_0^t \|\Delta g_s\|^2 ds \right]\].

Therefore Eq. \eqref{eq: expectation equality} becomes

\begin{equation}
	\begin{aligned}
		\mathbb{E}\left[\sup_{t\le T}
		\left\| Z_t^{(n+1)} - Z_t^{(n)} \right\|^2 \right]
		&\le 2 T .\mathbb{E} \left[\sup_{t\le T}\int_0^t \|\Delta f_s\|^2 ds\right] + 2 \mathbb{E}
		\left[\sup_{t\le T}\int_0^t \|\Delta g_s\|^2 ds \right]\\
		&\le 2(T+1)\left(\mathbb{E} \left[\sup_{t\le T}\int_0^t \|\Delta f_s\|^2 ds + \sup_{t\le T}\int_0^t \|\Delta g_s\|^2 ds \right] \right)\\
		&=\ 2(T+1)\left(\mathbb{E} \left[\sup_{t\le T}\int_0^t (\|\Delta f_s\|^2 + \|\Delta g_s\|^2) ds \right] \right).
	\end{aligned}
\end{equation}
Using the global Lipschitz continuity of Eq. \eqref{eq:lipschitz}, we obtain
\begin{equation}
	\begin{aligned}
		\mathbb{E}\left[\sup_{t\le T}
		\left\| Z_t^{(n+1)} - Z_t^{(n)} \right\|^2 \right] 
		&\le 2L^2(T+1) \mathbb{E} \left[\sup_{t\le T}\int_0^t \left\|Z_s^{(n)} - Z_s^{(n-1)}\right\|^2 ds\right]\\
		&\le 2L^2(T+1) \mathbb{E} \left[\int_0^T \left\|Z_s^{(n)} - Z_s^{(n-1)}\right\|^2 ds\right]\\
		&= 2L^2(T+1) \int_0^T \mathbb{E} \left[\left\|Z_s^{(n)} - Z_s^{(n-1)}\right\|^2 \right] ds\\
		&\le 2L^2T(T+1)\mathbb{E}\left[\sup_{t\le T}\left\|Z_t^{(n)} - Z_t^{(n-1)}\right\|^2\right].
	\end{aligned}
\end{equation}
Choosing $T>0$ such that $\alpha = 2L^2T(T+1)<1$ gives
\begin{equation}
	\mathbb{E}\left[\sup_{t\le T}\|Z_t^{(n+1)} - Z_t^{(n)}\|^2\right]
	\le \alpha
	\mathbb{E}\left[\sup_{t\le T}\|Z_t^{(n)} - Z_t^{(n-1)}\|^2\right]
\end{equation}
Hence, the Picard iteration is a contraction, i.e. $\{Z^{(n)}\}$ is Cauchy in $L^2(\Omega)$. Therefore, there exists a unique fixed point which is the strong solution of the Eq. \eqref{eq:latent_nsde}. 

\paragraph{Convergence and Uniqueness.} 
We will show that the latent Neural SDE admits at most one strong solution. We will prove this by contradiction. Suppose that there exist two solutions $\{Z_t\}_{t\in[0,T]}$ and $\{\widetilde{Z}_t\}_{t\in[0,T]}$ of Eq. \eqref{eq:latent_nsde} with the same initial condition $Z_0$ and driven by the same Brownian motion.\\\\
The difference process is defined as
\begin{equation}
		D_t := Z_t - \widetilde{Z}_t =
		\int_0^t \Big( f_\theta(Z_s,s,e_d) - f_\theta(\widetilde{Z}_s,s,e_d) \Big)\,ds \\
		+
		\int_0^t \Big( g_\theta(Z_s,s,e_d) - g_\theta(\widetilde{Z}_s,s,e_d) \Big)\,dW_s.
\end{equation}

Taking the supremum over $t\in[0,T]$ and expectation, and applying the inequality
$\|a+b\|^2 \le 2\|a\|^2 + 2\|b\|^2$, we obtain
\begin{equation}
	\mathbb{E}\left[
	\sup_{t\le T} \|D_t\|^2
	\right]
	\le
	2\,\mathbb{E}\left[
	\sup_{t\le T}
	\left\|
	\int_0^t \Delta f_s\,ds
	\right\|^2
	\right]
	+
	2\,\mathbb{E}\left[
	\sup_{t\le T}
	\left\|
	\int_0^t \Delta g_s\,dW_s
	\right\|^2
	\right],
\end{equation}
where
\[
\Delta f_s = f_\theta(Z_s,s,e_d) - f_\theta(\widetilde{Z}_s,s,e_d),
\quad
\Delta g_s = g_\theta(Z_s,s,e_d) - g_\theta(\widetilde{Z}_s,s,e_d).
\]

Using the global Lipschitz continuity of $f_\theta$ and $g_\theta$ from Eq. \eqref{eq:lipschitz}, together with standard bounds for deterministic and stochastic integrals, we obtain
\begin{equation}
	\mathbb{E}\left[
	\sup_{t\le T} \|D_t\|^2
	\right]
	\le
	C
	\int_0^T
	\mathbb{E}\left[
	\sup_{u\le s} \|D_u\|^2
	\right] ds,
\end{equation}
for some constant $C>0$.\\

The Gr\"onwall's inequality states that if $g(t) \leq A + C \int_0^t g(s) ds$ and if $A=0$, then $g(t) = 0$ for all $t \in [0, T]$. It follows that
\[
\mathbb{E}\left[
\sup_{t\le T} \|D_t\|^2
\right] = 0.
\]

Therefore, our assumption was wrong. Hence,
\[
Z_t = \widetilde{Z}_t
\quad \text{for all } t\in[0,T].
\]

Hence it is proved that the strong solution of Eq. \eqref{eq:latent_nsde} is unique.
\end{proof}
\bibliographystyle{ieeetr}
\bibliography{ref}
\end{document}